\theoremstyle{thmstyleone}%
\newtheorem{theorem}{Theorem}%  meant for continuous numbers
\theoremstyle{thmstyletwo}%
\theoremstyle{thmstylethree}%
\DeclareMathOperator{\rank}{rank}
\newcommand{\R}{\mathbb{R}}
\newcommand{\norm}[2]{\|#1\|_{#2}}
\newcommand{\Fnorm}[1]{\norm{#1}{\text{F}}}
\newcommand{\Tnorm}[1]{\norm{#1}{2}}
\newcommand{\Gcal}{\mathcal{G}}
\newcommand{\Gdmd}{\Gcal_{\text{DMD}}}
\begin{document}

\title[Article Title]{A Neural Operator based on Dynamic Mode Decomposition}

%%=============================================================%%
%% GivenName	-> \fnm{Joergen W.}
%% Particle	-> \spfx{van der} -> surname prefix
%% FamilyName	-> \sur{Ploeg}
%% Suffix	-> \sfx{IV}
%% \author*[1,2]{\fnm{Joergen W.} \spfx{van der} \sur{Ploeg} 
%%  \sfx{IV}}\email{iauthor@gmail.com}
%%=============================================================%%

\author[1,2]{\fnm{Nikita} \sur{Sakovich}}\email{217556@edu.fa.ru}

\author[1, 2]{\fnm{Dmitry} \sur{Aksenov}}\email{daaksenov@fa.ru}

\author[3*]{\fnm{Ekaterina} \sur{Pleshakova}}\email{pleshakova@mirea.ru}

\author[4]{\fnm{Sergey} \sur{Gataullin}}\email{sgataullin@cemi-ras.ru}

\affil[1]{ \orgname{Financial University under the Government of the Russian Federation}, \orgaddress{\street{Leningradsky Prospect, 49}, \city{Moscow}, \postcode{109456}, \country{Russia}}}

\affil[2]{ \orgname{The Scientific Research Institute of Goznak}, \orgaddress{\street{Mytnaya Str. 17}, \city{Moscow}, \postcode{115162}, \country{Russia}}}

\affil*[3]{ \orgname{MIREA---Russian Technological University}
\orgaddress{\street{78 Vernadsky Avenue}, \city{Moscow}, \postcode{119454}, \country{Russia}}}

\affil[4]{ \orgname{Central Economics and Mathematics Institute of the Russian Academy of Sciences }, \orgaddress{\street{Nakhimovsky Prospect, 47}, \city{Moscow}, \postcode{117418}, \country{Russia}}}

%%==================================%%
%% Sample for unstructured abstract %%
%%==================================%%

\abstract{The scientific computation methods development in conjunction with artificial intelligence technologies remains a hot research topic. Finding a balance between lightweight and accurate computations is a solid foundation for this direction. The study presents a neural operator based on the dynamic mode decomposi- tion algorithm (DMD), mapping functional spaces, which combines DMD and deep learning (DL) for spatiotemporal processes efficient modeling. Solving PDEs for various initial and boundary conditions requires significant computational resources. The method suggested automatically extracts key modes and sys- tem dynamics using them to construct predictions, reducing computational costs compared to traditional numerical methods. The approach has demonstrated its efficiency through comparative analysis of performance with closest analogues DeepONet and FNO in the heat equation, Laplace’s equation, and Burgers’ equation solutions approximation, where it achieves high reconstruction accuracy.}

\keywords{artificial intelligence, neural operators, partial differential equations, dynamic mode decomposition, scientific machine learning, deep learning}

%%\pacs[JEL Classification]{D8, H51}

%%\pacs[MSC Classification]{35A01, 65L10, 65L12, 65L20, 65L70}

\maketitle

\section{Introduction}\label{sec1}

Modern approaches to solving partial differential equations (PDEs) have developed along two main directions. Classical numerical methods such as the finite element method (FEM), finite difference method (FDM) and finite volume method (FVM) are based on rigorous mathematical discretization principles and provide high solution accuracy. However, these methods require significant computational resources when dealing with multi-parametric problems and complex geometric domains, substantially limiting their applicability in real-time tasks and parametric studies. 
In the last decade, a fundamentally novel class of methods based on deep learning (DL) has emerged, called neural operators [1-4]. The main application for neural operators is in learning surrogate maps for the solution operators of partial differential equations (PDEs), which are crucial tools in the natural environment modeling. In [5] authors introduce a spatio-spectral neural operator combining spectral feature learning and spatial feature learning. Raonic et al. adapted convolutional neural networks to demonstrate the ability to process functions as inputs and outputs. The resulting architecture, termed as convolutional neural operators (CNOs), is shown to significantly outperform competing models on benchmark experiments, paving the way for the design of an alternative robust and accurate framework for learning operators [6]. The key innovation in [7] is that a single set of network parameters, within a carefully designed network architecture, may be used to describe mappings between infinite-dimensional spaces and between different finite-dimensional approximations of those spaces. Li et al. formulated approximation of the infinite-dimensional mapping by composing nonlinear activation functions and a class of integral operators. The kernel integration is computed by message passing on graph networks. This approach has substantial practical consequences which we will illustrate in the context of mappings between input data to PDEs and their solutions. In [8] authors suggested super-resolution neural operator (SRNO), a deep operator learning framework that can resolve high-resolution (HR) images at arbitrary scales from the low-resolution (LR) counterparts. In [9], authors suggested U-shaped Neural Operator (U-NO), a U-shaped memory enhanced architecture that allows for deeper neural operators. U-NOs exploit the problem structures in function predictions and demonstrate fast training, data efficiency, and robustness with respect to hyperparameters choices. Inspired by the classical multiple methods, Li et al. [10] presented a novel multi-level graph neural network framework that captures interaction at all ranges with only linear complexity. This multi-level formulation is equivalent to recursively adding inducing points to the kernel matrix, unifying GNNs with multi-resolution matrix factorization of the kernel. Raonic et al. [11] presented novel adaptations for convolutional neural networks to demonstrate that they are indeed able to process functions as inputs and outputs. The resulting architecture, termed as convolutional neural operators (CNOs), is designed specifically to preserve its underlying continuous nature, even when implemented in a discretized form on a computer. Raonic et al. proved a universality theorem to show that CNOs can approximate operators arising in PDEs to desired accuracy. Michałowska et al. [12] addressed the problem extrapolate accurately and error accumulation in long-time integration by combining neural operators with recurrent neural networks, learning the operator mapping, while offering a recurrent structure to capture temporal dependencies. The integrated framework is shown to stabilize the solution and reduce error accumulation for both interpolation and extrapolation of the Korteweg-de Vries equation. Yang et al. [13] explored a prototype framework for learning general solutions using a recently developed machine learning paradigm called neural operator. A trained neural operator can compute a solution in negligible time for any velocity structure or source location. A scheme to train neural operators on an ensemble of simulations performed with random velocity models and source locations has been developed. In [14], authors presented a novel distributed training approach aimed at enabling a single neural operator with significantly fewer parameters to effectively tackle multi-operator learning challenges, all without incurring additional average costs. The method suggested is applicable to various neural operators, such as the deep operator neural networks (DON). The core idea was to independently learn the output basis functions for each operator using its dedicated data, while simultaneously centralizing the learning of the input function encoding shared by all operators using the entire dataset. Tran et al. [15] proposed the factorized Fourier neural operator (F-FNO), a learning-based approach for simulating PDEs. Starting from a recently proposed Fourier representation of flow fields, the F-FNO bridges the performance gap between pure machine learning approaches to that of the best numerical or hybrid solvers. Kovachki et al. [16] proved that Fourier neural operators (FNOs) are universal, in the sense that they can approximate any continuous operator to desired accuracy. Moreover, they suggested a mechanism by which FNOs can approximate operators associated with PDEs efficiently. Explicit error bounds were derived to show that the size of the FNO, approximating operators associated with a Darcy type elliptic PDE and with the incompressible Navier-Stokes equations of fluid dynamics, only increases sub (log)-linearly in terms of the reciprocal of the error. Thus, FNOs were shown to efficiently approximate operators arising in a large class of PDEs. In [17], authors proposed a novel approach, the Newton informed neural operator, which learns the Newton solver for nonlinear PDEs. The method integrates traditional numerical techniques with the Newton nonlinear solver, efficiently learning the nonlinear mapping at each iteration. This approach allows computing multiple solutions in a single learning process while requiring fewer supervised data points than existing neural network methods. The pioneering DeepONet and MIONet approaches [18-21] proposed architectures capable of learning nonlinear operators between functional spaces, enabling rapid prediction of solutions for new parameters without repeated computations. Recent studies have demonstrated the effectiveness of modified Fourier neural operators for accelerating gas filtration simulations in underground storage systems while maintaining high accuracy, even under significant geometric variability and limited training data [22]. Subsequent Fourier neural operator (FNO) [23] and wavelet neural operator (WNO) [24, 25] utilized spectral transformations to effectively capture multi-scale solution. Approaches such as Deep Ritz [29] and Deep Galerkin Method [30] proposed an alternative paradigm by minimizing corresponding energy functionals. More specialized methods, including Laplace Neural Operator [31] and Koopman Neural Operator [32-34], were developed for specific equation classes, demonstrating superior efficiency in their respective niches. Of particular note is DIMON [21], which introduced learning on families of diffeomorphic domains, significantly expanding the applicability of neural operators for problems with complex geometry.
Despite impressive successes, modern neural operators face fundamental limitations. Most approaches require substantial training data and complex hyperparameter tuning. Physical interpretability of solutions often remains low, and generalization capability to new equation types is limited. These problems are especially acute for strongly nonlinear and multi-scale systems, where traditional methods maintain advantages in accuracy and stability.
In this study, we present a novel hybrid neural operator $G_\theta: \mathcal{A} \to \mathcal{U}$ that integrates dynamic mode decomposition (DMD) [35-37] into a deep learning architecture. Unlike purely data-driven approaches, our method explicitly extracts dominant system modes through DMD, enabling the deep neural network to better interpret data relative to physical laws. The suggested approach represents an effective alternative to existing solutions. Recent advances in neural operator methods include hard-constrained wide-body PINNs for improved boundary condition handling [38], a hybrid encoder-decoder DL model termed AL-PKAN to efficiently solve the numerical solutions of PDEs [39], hybrid RBF-neural network approaches for better gradient approximation [40], multi-step physics-informed DeepONets for direct PDE solving [41], and second-order gradient-enhanced PINNs for parabolic equations [42]. The method retains the fast prediction capability for new parameters characteristic of neural operators while improving physical interpretability and stability typical of classical methods.
The main study contribution consist in the introduction into scientific and technical circulation for a novel neural operators class integrating the dynamic mode decomposition algorithm into the deep learning architecture for the effective spatiotemporal processes modeling by approximating partial differential equations solutions. For the suggested approach, the research presents the mathematical foundations, model, algorithm, neural network architecture, software product and patent [43].

\section{Materials and Methods}

\subsection{Problem Statement}
Consider a general class of nonlinear parametrized partial differential equations arising in fundamental problems of mathematical physics and engineering applications. Let $\Omega \subset \mathbb{R}^d$ be a bounded Lipschitz domain with boundary $\partial\Omega$, where $d \geq 1$ denotes the spatial dimension of the problem.

The mathematical formulation involves finding a function $u(x;\theta)$, belonging to an appropriate functional space, that satisfies the following system:

\begin{equation}
\begin{cases}
\mathcal{L}_\theta u(x) = f(x) & \text{in } \Omega \\
\mathcal{B}_\theta u(x) = g(x) & \text{on } \partial\Omega
\end{cases}
\label{eq:pde_system}
\end{equation}

where:
\begin{itemize}
\item $\mathcal{L}_\theta$ is a differential operator parameterized by $\theta \in \Theta$
\item $\mathcal{B}_\theta$ is a boundary condition operator
\item $u(x)$ is the unknown solution
\item $f(x)$, $g(x)$ are given functions
\item $\Omega$ is the domain with boundary $\partial\Omega$
\end{itemize}

Here, the differential operator $\mathcal{L}_\theta: W^{k,p}(\Omega) \to L^q(\Omega)$ maps from the Sobolev space $W^{k,p}(\Omega)$ to the Lebesgue space $L^q(\Omega)$, where the exponents $k,p,q$ are determined by the specific form of the operator. The parameter $\theta \in \Theta \subset \mathbb{R}^m$ characterizes physical properties of the system, such as diffusion coefficients, elasticity parameters, or other material properties.

The boundary operator $\mathcal{B}_\theta$ may represent various condition types:
\begin{itemize}
\item Dirichlet: $\mathcal{B}_\theta u = u|_{\partial\Omega}$
\item Neumann: $\mathcal{B}_\theta u = \nabla u \cdot n|_{\partial\Omega}$
\item Mixed or nonlinear boundary conditions
\end{itemize}

For a wide class of physically significant problems, such as Navier-Stokes equations, elasticity, or reaction-diffusion systems, the solution $u$ belongs to the Sobolev space $H^1(\Omega) = W^{1,2}(\Omega)$, which guarantees finite system energy. The right-hand side $f$ is typically assumed to be an element of $L^2(\Omega)$-space.
The fundamental challenge in numerical solution of such problems lies in the need for repeated computation of solutions for different parameter values $\theta$ and various domain configurations $\Omega$. Traditional grid-based approximation methods require: (1) constructing new discretizations for each $\theta$; (2) solving large linear systems; (3) fine-tuning discretization parameters. This leads to the following issues:

\begin{enumerate}
\item \textbf{Computational complexity}: Each new parameter value $\theta$ requires complete recomputation of the solution, leading to cubic complexity $O(N^3)$ for most discretization methods.

\item \textbf{Parametric flexibility}: Classical approaches adapt poorly to changes in domain geometry $\Omega$ or boundary condition types $\mathcal{B}_\theta$.

\item \textbf{Multiscale nature}: Solutions often contain features at different spatiotemporal scales, requiring extremely fine grids in traditional methods.
\end{enumerate}

Neural operators offer an alternative approach by learning a parametrized mapping $G_\theta: \mathcal{A} \to \mathcal{U}$ between functional spaces, where $\mathcal{A}$ represents the input parameter space (operators, boundary conditions) and $\mathcal{U} \subset H^1(\Omega)$ is the solution space. However, existing implementations face challenges:
\begin{itemize}
\item Low efficiency with small datasets
\item Lack of physical consistency guarantees
\item Limited generalization capability
\end{itemize}

Our approach overcomes these limitations through integration of dynamic mode decomposition methods into the neural operator architecture, enabling:
\begin{itemize}
\item Automatic extraction of dominant solution modes
\item Problem dimensionality reduction
\item Preservation of physical interpretability
\end{itemize}

Formally, we construct a DMD-enhanced neural operator $G^{\mathrm{DMD}}_\theta$ that minimizes the error functional:

\begin{equation}
\mathcal{E}(\theta) = \mathbb{E}_{(f, g, u_{\mathrm{true}}) \sim \mathcal{D}} \left[ \| G^{\mathrm{DMD}}_\theta(f, g) - u_{\mathrm{true}} \|_{H^{1}(\Omega)} \right]
\label{eq:error_func}
\end{equation}

where the expectation is taken over the data distribution $\mathcal{D}$ of input functions $f$, boundary conditions $g$, and corresponding true solutions $u_{\mathrm{true}}$. The Sobolev space norm $H^1(\Omega)$ ensures consideration of both function values and their gradients.
\subsection{Neural Operators}
Modern neural operators represent a powerful tool for parametric modeling of differential equation solutions, based on deep learning. These methods fundamentally approximate the nonlinear solution operator using specialized neural network architectures [1].

Mathematically, a neural operator implements a mapping $G_\theta: \mathcal{V} \to \mathcal{U}$ between functional spaces, where $\mathcal{V}$ is the input parameter space (initial/boundary conditions, equation coefficients) and $\mathcal{U}$ is the solution space. In its most general form, this mapping is expressed as:

\begin{equation}
G_\theta(v)(y) = \sum_{k=1}^p b_k(v)\cdot t_k(y)
\end{equation}

where $b_k: \mathcal{V} \to \mathbb{R}$ is the "branch net" processing input parameters, and $t_k: \Omega \to \mathbb{R}$ is the "trunk net" working with prediction point coordinates. This architecture effectively separates the processing of problem parameters and spatial coordinates.
More advanced approaches like Fourier Neural Operator (FNO) [22,23] and Wavelet Neural Operator (WNO) [24,25] employ integral transforms for global dependency capture:

\begin{equation}
(K_\theta v)(x) = \int_\Omega \kappa_\theta(x,y)v(y)dy
\end{equation}

where $\kappa_\theta$ is a parameterized kernel. In FNO, this kernel is implemented via Fourier transform:

\begin{equation}
(K_\theta v)(x) = \mathcal{F}^{-1}\big(R_\theta \cdot \mathcal{F}(v)\big)(x)
\end{equation}

where $R_\theta$ is a learnable spectral filter. WNO uses a similar approach but with wavelet transform, making it more suitable for problems with local features.

Theoretically, neural operators possess universal approximation capability: for any compact operator $\mathcal{G}$ and $\epsilon > 0$, there exists a neural operator $G_\theta$ such that:

\begin{equation}
\sup_{v \in \mathcal{V}} \| \mathcal{G}(v) - G_\theta(v) \|_{L^2(\Omega)} < \epsilon
\end{equation}

In practice, neural operator training is performed by minimizing an error functional typically containing two main components: solution reproduction error and physical consistency. The first term ensures closeness to reference solutions (when available), while the second guarantees satisfaction of the original differential equations:

\begin{equation}
\mathcal{L}(\theta) = \alpha \|G_\theta(v) - u\|_{L^2(\Omega)}^2 + \beta \|\mathcal{L}(G_\theta(v)) - f\|_{L^2(\Omega)}^2
\end{equation}

An important aspect is the choice of appropriate functional space for error evaluation. Unlike classical neural networks that typically use Euclidean norms, Sobolev space norms $H^k(\Omega)$ are more natural for differential equations as they account for both function values and their derivatives.

Physics-informed neural operators (PINNs) [26-28] hold a special place as they explicitly incorporate mathematical physics equations into the learning process. This is achieved either through additional terms in the error functional or via specialized architectural solutions that guarantee satisfaction of certain physical laws. Such approaches are particularly useful when training data is limited or noisy.

The key computational advantage of neural operators lies in their ability to rapidly produce solutions for new problem parameters after training, including adaptation to different boundary conditions. While traditional methods require $O(N^3)$ operations for each new parameter set, neural operators achieve $O(N \log N)$ complexity for solution prediction, making them especially attractive for real-time applications and parametric studies.

However, they have several limitations:
\begin{itemize}
\item Require large amounts of training data
\item Less accurate for problems with discontinuous solutions
\item Difficulties in interpreting obtained solutions
\end{itemize}

Theoretical development of neural operators continues in several directions: improving accuracy for discontinuous solutions, incorporating prior physical knowledge, and developing efficient training schemes for multiscale problems. A promising direction is combining neural operators with traditional numerical methods to merge their advantages - physical interpretability of classical approaches with computational efficiency of machine learning methods.

Emerging research directions include hybrid approaches that combine neural operators with traditional numerical methods, potentially merging their respective advantages. Of particular interest are methods that automatically adapt their architecture to specific equation classes.

\subsection{Dynamic Mode Decomposition (DMD)}
Dynamic Mode Decomposition (DMD) [35-37] is a data analysis method that extracts spatiotemporal structures from dynamical systems. Consider a sequence of system state snapshots $\{\mathbf{x}_0, \mathbf{x}_1, ..., \mathbf{x}_m\}$, where $\mathbf{x}_i \in \mathbb{R}^n$. The core assumption of DMD is the existence of a linear operator $\mathbf{A}$ such that:

\begin{equation}
\mathbf{x}_{k+1} = \mathbf{A}\mathbf{x}_k
\end{equation}

The first key step of the algorithm is the singular value decomposition (SVD) of the data matrix:

\begin{equation}
\mathbf{X} = [\mathbf{x}_0 \ \mathbf{x}_1 \ \cdots \ \mathbf{x}_{m-1}] \approx \mathbf{U}_r \mathbf{\Sigma}_r \mathbf{V}_r^*
\end{equation}

where $r$ is the selected truncation rank, $\mathbf{U}_r$ and $\mathbf{V}_r$ are unitary matrices, and $\mathbf{\Sigma}_r$ is the diagonal matrix of singular values.

\begin{algorithm}[H]
\caption{DMD Algorithm}
\label{alg:DMD}
\begin{algorithmic}[1]
\State Construct data matrices:
$\mathbf{X} = [\mathbf{x}_0 \ \cdots \ \mathbf{x}_{m-1}]$, 
$\mathbf{X}' = [\mathbf{x}_1 \ \cdots \ \mathbf{x}_m]$
\State Compute SVD: $\mathbf{X} = \mathbf{U}\mathbf{\Sigma}\mathbf{V}^*$
\State Determine rank $r$ (by energy criterion or preset)
\State Truncate SVD: $\mathbf{U}_r$, $\mathbf{\Sigma}_r$, $\mathbf{V}_r$
\State Build projected operator:
$\tilde{\mathbf{A}} = \mathbf{U}_r^* \mathbf{X}' \mathbf{V}_r \mathbf{\Sigma}_r^{-1}$
\State Find eigenvalues and eigenvectors:
$\tilde{\mathbf{A}}\mathbf{W} = \mathbf{W}\mathbf{\Lambda}$
\State Compute DMD modes:
$\mathbf{\Phi} = \mathbf{X}' \mathbf{V}_r \mathbf{\Sigma}_r^{-1} \mathbf{W}$
\State Determine mode amplitudes:
$\mathbf{b} = \mathbf{\Phi}^\dagger \mathbf{x}_0$
\end{algorithmic}
\end{algorithm}

The resulting DMD modes $\mathbf{\Phi} = [\phi_1 \ \cdots \ \phi_r]$ and corresponding eigenvalues $\lambda_i$ allow approximation of system dynamics:

\begin{equation}
\mathbf{x}(t) \approx \sum_{i=1}^r \phi_i \lambda_i^t b_i
\end{equation}

where $b_i$ are coefficients determined from initial conditions.
For practical implementation, several aspects should be considered:
\begin{itemize}
\item Rank selection $r$ can be based on energy criteria (e.g., 95\% variance)
\item Regularization is recommended for stable inversion of $\mathbf{\Sigma}_r$
\item For noisy data, optimized DMD methods are beneficial
\end{itemize}

In the context of neural operators, DMD provides physically interpretable dynamics representation, enabling:
\begin{itemize}
\item Dimensionality reduction of input data
\item Improved model generalization capability
\item Preservation of solution physical consistency
\end{itemize}

\subsection{Algorithm}
The DMD Neural Operator model is described by the following equations:

\begin{multline}\label{eq:general_form}
\tilde{F}_0(m_0, \dots, m_n, d_0, \dots, d_k, u_1, \dots, u_m)(x) = \\ S 
\left(
\begin{array}{c}
\underbrace{\tilde{g}_{m_0}(\phi(m_0))}_{\text{branch modes}_0} \odot \cdots \odot \underbrace{\tilde{g}_{m_n}(\phi(m_n))}_{\text{branch modes}_n} \odot 
\underbrace{\tilde{g}_{d_0}(\phi(d_0))}_{\text{branch dynamics}_0} \odot \cdots \odot \underbrace{\tilde{g}_{d_k}(\phi(d_k))}_{\text{branch dynamics}_k} \\
\odot \underbrace{\tilde{g}_{v_0}(\phi(u_1))}_{\text{branch}_0} \odot \cdots \odot \underbrace{\tilde{g}_{v_m}(\phi(u_m))}_{\text{branch}_m} 
\odot \underbrace{\tilde{f}(x)}_{\text{trunk}}
\end{array}
\right)
\end{multline}

In equation~\eqref{eq:general_form}, the following notation is used:
$S(\cdot)$ is the aggregation operator (sum of components); Functions $\tilde{g}_{m_i}$, $\tilde{g}_{d_j}$ and $\tilde{g}_{v_l}$ are separate model branches, each processing its own subset of input data; $\phi(\cdot)$ is the input transformation function before feeding into corresponding branches; $\tilde{f}(x)$ is the trunk branch that processes input data $x$; $\odot$ denotes the Hadamard product (element-wise multiplication).

The model consists of several specialized branches:

\begin{itemize}
    \item \textbf{Branch modes} - branches accounting for system modal characteristics
    \item \textbf{Branch dynamics} - branches modeling system dynamics
    \item \textbf{Branches} - branches accounting for differential operator discretization
    \item \textbf{Trunk} - branch processing grid input data
\end{itemize}

The model can also be written in compact form:

\begin{equation}\label{eq:compact_form}
  \tilde{F}_0(m_0, \dots, m_n, d_0, \dots, d_k, u_1, \dots, u_m)(x) 
  = \sum_{i=1}^{m} t_i \prod_{j=1}^{n} b_j \prod_{k=1}^{s} b^{\text{modes}}_k b^{\text{dynamics}}_k
\end{equation}

Where:

\begin{itemize}
  \item $t_i$ - output of trunk network $\tilde{f}(x)$, depending on grid parameters
  \item $b_j$ - output of branch networks depending on differential operator values $u_j$
  \item $b^{\text{modes}}_k$, $b^{\text{dynamics}}_k$ - outputs of branches modeling modal and dynamic properties from DMD analysis
\end{itemize}

Thus, equation~\eqref{eq:compact_form} emphasizes the structural and modular nature of the model: the final solution is formed by a weighted sum of products of outputs from specialized branches, each modeling a specific aspect of system state.

\begin{algorithm}[H]
\caption{DMD Neural Operator Algorithm}
\begin{algorithmic}[1]
\State \textbf{Input:} $\{(v_i, u(v_i))\}_{i=1}^{N}$ - grid points $v_i \in \mathbb{R}^d$ and corresponding function values $u(v_i) \in \mathbb{R}^p$
\State \textbf{Parameters:} $N_{train}$ - number of training epochs, $\eta$ - learning rate, $\lambda$ - regularization coefficient

\State \textbf{Data preprocessing:}
\State Split data into training $\mathcal{D}_{train} = \{(v_i^{train}, u(v_i^{train}))\}_{i=1}^{N_{train}}$ and test $\mathcal{D}_{test} = \{(v_i^{test}, u(v_i^{test}))\}_{i=1}^{N_{test}}$ sets

\State \textbf{DMD decomposition:}
\State $\{\phi_k^{mode}\}_{k=1}^{s}, \{\psi_k^{dynamics}\}_{k=1}^{s} \gets \text{DMD}(\{u(v_i^{train})\}_{i=1}^{N_{train}})$ 
\Comment{Compute modes and dynamics of function $u$ at grid points using DMD (see Algorithm \ref{alg:DMD})}

\State \textbf{Neural operator initialization:}
\State Initialize parameters $\theta$ of neural operator $\tilde{F}_\theta$ according to chosen architecture

\State \textbf{Neural operator training:}
\For{epoch $= 1, \ldots, N_{train}$}

    \State Sample mini-batch $\mathcal{B} \subset \mathcal{D}_{train}$
    
    \State \textbf{Forward pass:}
    \For{$(v_i, u(v_i)) \in \mathcal{B}$}
    
        \State Compute prediction $\hat{u}_i = \tilde{F}_\theta(m, d, v_i)$
        
    \EndFor
    
    \State \textbf{Loss computation:}
    \State $\mathcal{L}(\theta) = \frac{1}{|\mathcal{B}|}\sum_{(v_i, u(v_i)) \in \mathcal{B}} \|u(v_i) - \hat{u}_i\|_2^2 + \lambda \|\theta\|_2^2$
    
    \State \textbf{Backpropagation:}
    \State Compute gradients $\nabla_\theta \mathcal{L}(\theta)$
    \State Update parameters $\theta \gets \theta - \eta \cdot \nabla_\theta \mathcal{L}(\theta)$
    
\EndFor

\State \textbf{Testing:}
\State Compute predictions $\hat{u}_i^{test} = \tilde{F}_\theta(m, d, v_i^{test})$ for all $(v_i^{test}, u(v_i^{test})) \in \mathcal{D}_{test}$
\State Evaluate model using metrics: MSE, relative $L^2$-error, maximum error

\State \textbf{Output:} Trained neural operator $\tilde{F}_\theta$ and its quality metrics
\end{algorithmic}
\end{algorithm}

The \texttt{DMD Neural Operator}  algorithm approximates a parameterized function $u(v)$ defined on discrete grid points $v_i \in \mathbb{R}^d$ using a neural operator enhanced with prior information extracted via Dynamic Mode Decomposition (DMD). The key idea is to decompose the response $u(v)$ into modal and dynamic components that are then fed as inputs to a neural operator-type architecture.

\subsubsection{Algorithm Steps}

\textbf{Input:} The algorithm takes pairs $\{(v_i, u(v_i))\}_{i=1}^N$, where $v_i \in \mathbb{R}^d$ are coordinates and $u(v_i) \in \mathbb{R}^p$ are differential operator values. Hyperparameters include: number of training epochs $N_{train}$, gradient descent step $\eta$, and regularization coefficient $\lambda$.

\textbf{Data preprocessing:} Data is split into training and test sets:
    \[
    \mathcal{D}_{train} = \{(v_i^{train}, u(v_i^{train}))\}, \quad \mathcal{D}_{test} = \{(v_i^{test}, u(v_i^{test}))\}
    \]

\textbf{DMD decomposition:} DMD is applied to the function values $u$ on training data:
    \[
    \{\phi_k^{mode}\}_{k=1}^s,\quad \{\psi_k^{dynamics}\}_{k=1}^s \gets \text{DMD}(\{u(v_i^{train})\})
    \]
    where $\phi_k^{mode}$ are spatial modes and $\psi_k^{dynamics}$ are temporal or parametric dynamics components.

\textbf{Neural operator initialization:} Parameters $\theta$ of neural operator $\tilde{F}_\theta$ are initialized with a \texttt{branch-trunk} architecture accepting modes $\phi_k^{mode}$, dynamics $\psi_k^{dynamics}$ and point $v_i$ as inputs.

\textbf{Model training:}

        For each epoch from 1 to $N_{train}$:
        \begin{enumerate}
            \item Form mini-batch $\mathcal{B} \subset \mathcal{D}_{train}$
            \item For each pair $(v_i, u(v_i)) \in \mathcal{B}$ compute prediction:
            \[
            \hat{u}_i = \tilde{F}_\theta(m, d, v_i), \quad \text{where } m = \{\phi_k^{mode}\},\ d = \{\psi_k^{dynamics}\}
            \]
            \item Compute loss function:
            \[
            \mathcal{L}(\theta) = \frac{1}{|\mathcal{B}|} \sum \|u(v_i) - \hat{u}_i\|_2^2 + \lambda \|\theta\|_2^2
            \]
            \item Update parameters:
            \[
            \theta \gets \theta - \eta \cdot \nabla_\theta \mathcal{L}(\theta)
            \]
        \end{enumerate}

\textbf{Model testing:} After training, the model predicts on test data:
    \[
    \hat{u}_i^{test} = \tilde{F}_\theta(m, d, v_i^{test})
    \]
    Quality metrics are computed (e.g., MSE, relative $L^2$-error, maximum error).

\textbf{Output:} The algorithm returns trained neural operator $\tilde{F}_\theta$ and test metrics. The model can predict $u(v)$ at new, previously unseen points $v$.

\subsubsection{Error Bound for DMD-Neural Operator}

\begin{theorem}[Error Bound for DMD-Neural Operator]\label{thm:dmd-error}
Let \( u \in \mathbb{R}^{n \times m} \) and \( \varepsilon > 0 \). Assume:
\begin{enumerate}
    \item \(\exists u_r = \Phi_r(u)\Sigma_r V_r^\top\) $\in \R^{n \times m}$ with $\rank(u_r) \leq r$ (optimal rank-\(r\) SVD approximation) such that \(\|u - u_r\|_F \leq \varepsilon\),
    \item \(\sigma_r(u) > \sigma_{r+1}(u)\) (non-degenerate singular values),
    \item $\Gcal: \R^{n \times m} \to \R^p$ is $L_{\Gcal}$-Lipschitz: $\Tnorm{\Gcal(u) - \Gcal(v)} \leq L_{\Gcal}\Fnorm{u - v}$
    \item Neural network $H$ is $L_H$-Lipschitz in all inputs:
        \[
        \|H(u, \Phi, c) - H(u', \Phi', c')\|_2 \leq L_H \left( \|u - u'\|_F + \|\Phi - \Phi'\|_F + \|c - c'\|_F \right).
        \]
\end{enumerate}
Then for the DMD-neural operator \(\mathcal{G}_{\text{dmd}}(u) = H(u, \Phi_r(u), \Phi_r(u)^\top u)\):
\begin{equation}
\|\mathcal{G}(u) - \mathcal{G}_{\text{dmd}}(u)\|_2 \leq (L_{\mathcal{G}} + 2L_H) \varepsilon.
\end{equation}
\end{theorem}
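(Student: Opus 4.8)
My plan is to insert the optimal rank-$r$ truncation $u_r$ as an intermediate point. Writing $\Phi_r := \Phi_r(u)$ for the DMD modes extracted from $u$ and setting, as a proof device, $\Gdmd(u_r) := H(u_r, \Phi_r, \Phi_r^\top u_r)$ --- which is exactly the network evaluated on the clean, rank-$r$ data, since $\Phi_r^\top u_r = \Sigma_r V_r^\top$ --- the triangle inequality gives
\[
\Tnorm{\Gcal(u) - \Gdmd(u)} \le \Tnorm{\Gcal(u) - \Gcal(u_r)} + \Tnorm{\Gcal(u_r) - \Gdmd(u_r)} + \Tnorm{\Gdmd(u_r) - \Gdmd(u)} .
\]
I would then control the three pieces by, respectively: the Lipschitz continuity of $\Gcal$; consistency of the network on exact low-rank input, i.e. $H(u_r, \Phi_r, \Phi_r^\top u_r) = \Gcal(u_r)$ (the DMD step is lossless once the input already has rank at most $r$), which kills the middle term; and the Lipschitz continuity of $H$.

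For the first term, assumptions (1) and (3) give at once $\Tnorm{\Gcal(u) - \Gcal(u_r)} \le L_{\Gcal}\Fnorm{u - u_r} \le L_{\Gcal}\varepsilon$. For the third term, the key point is that the mode slot of $H$ is the same in $\Gdmd(u)$ and $\Gdmd(u_r)$ (both use $\Phi_r$), so it contributes zero, while $\Phi_r$ has orthonormal columns and hence operator norm one, so $\Fnorm{\Phi_r^\top u - \Phi_r^\top u_r} \le \Fnorm{u - u_r} \le \varepsilon$. Substituting into assumption (4) (together with $\Fnorm{u - u_r} \le \varepsilon$ for the first slot),
\[
\Tnorm{\Gdmd(u) - \Gdmd(u_r)} = \Tnorm{H(u,\Phi_r,\Phi_r^\top u) - H(u_r,\Phi_r,\Phi_r^\top u_r)} \le L_H(\varepsilon + 0 + \varepsilon) = 2L_H\varepsilon .
\]
Adding the three bounds yields $(L_{\Gcal} + 2L_H)\varepsilon$, which is the assertion.

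The hard part is not the arithmetic but the two structural facts the argument quietly invokes. First, the consistency identity $H(u_r, \Phi_r, \Phi_r^\top u_r) = \Gcal(u_r)$ should be promoted to an explicit hypothesis (or derived from a training objective that forces $H$ to reproduce $\Gcal$ on the reduced model), since without it the middle term carries an uncontrolled network error and the clean bound degrades. Second, assumption (2) --- the spectral gap $\sigma_r(u) > \sigma_{r+1}(u)$ --- is exactly what makes the top-$r$ singular subspace, hence the truncation $u_r$ and a consistent choice of the modes $\Phi_r$, well defined (and, via Wedin / Davis--Kahan, Lipschitz-stable under perturbations of $u$); this becomes essential if one wants to replace the exact optimal truncation in assumption (1) by an approximate low-rank model produced by a finite-precision DMD, which is the practically relevant situation.
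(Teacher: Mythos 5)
Your argument follows the same route as the paper's own proof --- insert the truncation $u_r$, bound $\Tnorm{\Gcal(u)-\Gcal(u_r)}$ by $L_{\Gcal}\varepsilon$ via hypothesis (3), and bound $\Tnorm{\Gdmd(u)-\Gdmd(u_r)}$ by $2L_H\varepsilon$ using that the mode slot is unchanged and that $\norm{\Phi_r(u)}{2}=1$ controls the coefficient slot --- but your three-term triangle inequality is more honest than the paper's two-term one, and the discrepancy matters. The paper writes $\Tnorm{\Gcal(u)-\Gdmd(u)} \le \Tnorm{\Gcal(u)-\Gcal(u_r)} + \Tnorm{\Gdmd(u_r)-\Gdmd(u)}$, which is a valid application of the triangle inequality only if the two intermediate points coincide, i.e.\ only if $\Gcal(u_r)=\Gdmd(u_r)$. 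That is exactly the consistency identity you flag as a hidden hypothesis, and you are right that it is indispensable: hypotheses (1)--(4) impose no relation whatsoever between $\Gcal$ and $H$ (take $H\equiv 0$, so $L_H=0$ and $\Gdmd\equiv 0$, together with a nonzero constant $\Gcal$, so $L_{\Gcal}=0$; the claimed bound then reads $\Tnorm{\Gcal(u)}\le 0$ and fails). So the middle term $\Tnorm{\Gcal(u_r)-\Gdmd(u_r)}$ must either be assumed zero, as you propose, or carried as an additional approximation-error summand in the final estimate; the paper's proof silently absorbs it, and your version is the correct repair. The one step you treat slightly differently is the mode slot: the paper invokes Eckart--Young to conclude $\Phi_r(u_r)=\Phi_r(u)$, which is where the spectral-gap hypothesis (2) is actually used, whereas you define $\Gdmd(u_r)$ with the modes of $u$ already substituted; both give a zero contribution from that slot, but the paper's reading is the one consistent with the literal definition $\Gdmd(v)=H(v,\Phi_r(v),\Phi_r(v)^\top v)$, so if you keep your shortcut you should note that it is the gap condition (2) that licenses it.
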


\begin{proof}
By triangle inequality:
\begin{align}
\Tnorm{\Gcal(u) - \Gdmd(u)} &\leq \underbrace{\Tnorm{\Gcal(u) - \Gcal(u_r)}}_{(a)} + \underbrace{\Tnorm{\Gdmd(u_r) - \Gdmd(u)}}_{(b)}
\end{align}

\textit{Term (a):} By Lipschitz continuity of $\Gcal$:
\begin{equation}
\Tnorm{\Gcal(u) - \Gcal(u_r)} \leq L_{\Gcal} \Fnorm{u - u_r} \leq L_{\Gcal} \varepsilon
\end{equation}

\textit{Term (b):} By Eckart–Young theorem [44], $\Phi_r(u_r) = \Phi_r(u)$. Then:
\begin{align}
\Tnorm{\Gdmd(u_r) - \Gdmd(u)} 
&= \Tnorm{H(u_r, \Phi_r(u_r), \Phi_r(u_r)^\top u_r) - H(u, \Phi_r(u), \Phi_r(u)^\top u)} \\
&\leq L_H \left( \Fnorm{u_r - u} + \underbrace{\Fnorm{\Phi_r(u_r) - \Phi_r(u)}}_{=0} + \Fnorm{\Phi_r(u)^\top u_r - \Phi_r(u)^\top u} \right) \\
&\leq L_H \left( \varepsilon + \norm{\Phi_r(u)}{2} \Fnorm{u_r - u} \right) \\
&\leq L_H (\varepsilon + 1 \cdot \varepsilon) = 2L_H \varepsilon
\end{align}
where $\norm{\Phi_r(u)}{2} = 1$ by orthogonality.

Combining (a) and (b) yields the result.
\end{proof}

\subsubsection{Neural Network Architecture}

\begin{figure}[H]
\includegraphics[width=1.15\textwidth]{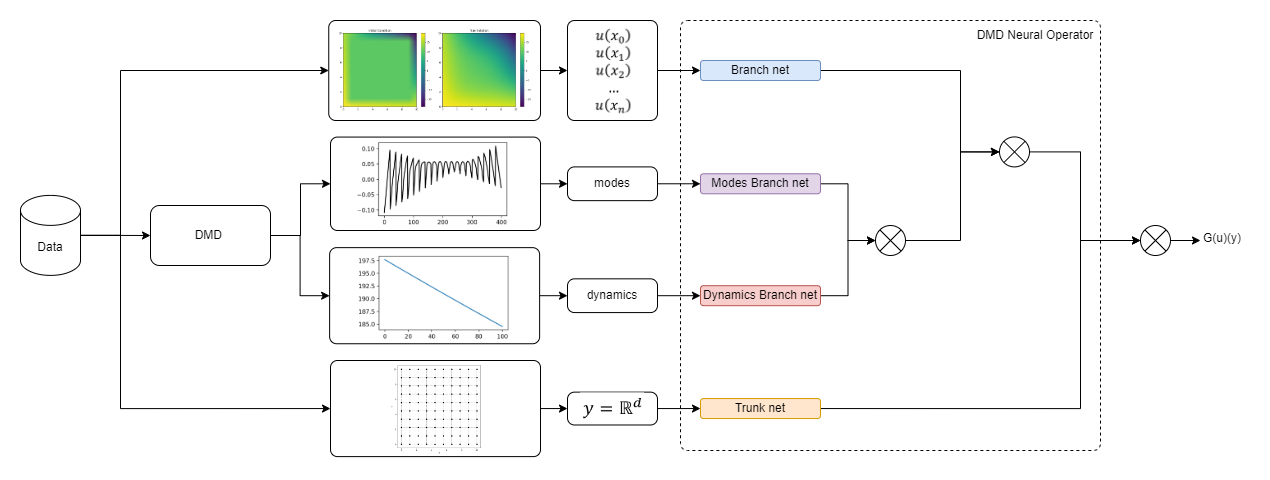}
\caption{Conceptual neural network architecture}    
\label{fig:concept}
\end{figure}

\begin{figure}[H]
\includegraphics[width=\textwidth]{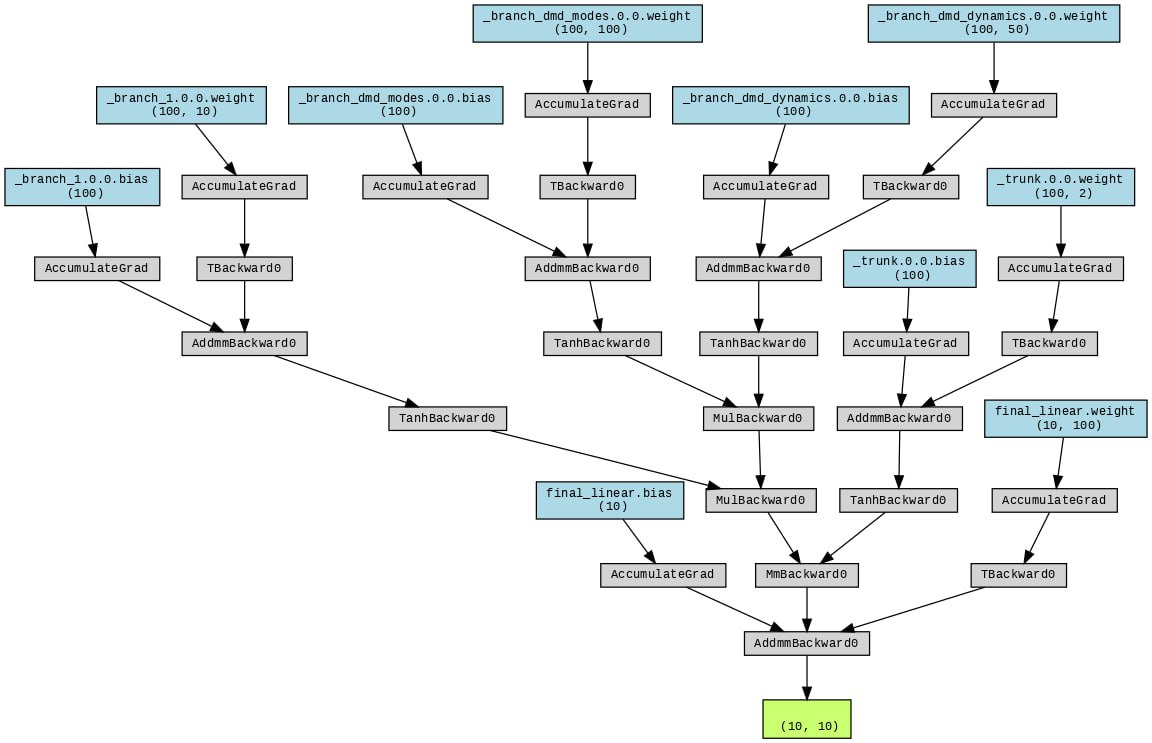}
\caption{Example implementation of neural network architecture}    
\label{fig:example}
\end{figure}

The diagram illustrates the basic neural operator architecture consisting of two key components: branch net and trunk net. This structure implements the mapping $G: u \rightarrow G(u)(y)$, where $u$ is the input function and $G(u)(y)$ is the output function value at point $y$.

The architecture components are:

\begin{enumerate}
\item \textbf{Branch net}:
\begin{itemize}
\item Processes input function $u$ and transforms it into finite-dimensional representation
\item Contains several fully-connected layers with nonlinear activations (tanh)
\item Output is feature vector $[b_1, ..., b_p] \in \mathbb{R}^p$
\end{itemize}

\item \textbf{Trunk net}:
\begin{itemize}
\item Takes coordinate $y$ (or coordinates for higher dimensions) and transforms it into vector of same dimension
\item Generates basis functions $[t_1(y), ..., t_p(y)] \in \mathbb{R}^p$
\item Enables operator evaluation at arbitrary points
\end{itemize}

\item \textbf{DMD branches}:
\begin{itemize}
\item \textit{Modes Branch net}: processes spatial modes $\mathcal{M}$
\item \textit{Dynamics Branch net}: analyzes temporal dynamics $\mathcal{D}$
\item Provide physical interpretability of solutions
\end{itemize}
\end{enumerate}

\section{Experiments and Results}
This section presents a comprehensive evaluation of the proposed hybrid neural operator's effectiveness on various mathematical physics problems. The focus is on comparing prediction accuracy, computational efficiency, and method stability. Experiments were conducted on synthetic datasets covering different types of partial differential equations. For each test case, we analyzed:

\begin{itemize}
\item Solution reconstruction accuracy
\item Preservation of physical consistency
\item Training and prediction time
\end{itemize}

All experiments were performed on CPU hardware using the PyTorch framework. Quantitative comparison used mean squared error (MSE) and maximum absolute error (MAE) metrics.

\subsection{Laplace Equation}
The Laplace equation in $\mathbb{R}^n$ is:

\begin{equation}
\nabla^2 u = \sum_{i=1}^n \frac{\partial^2 u}{\partial x_i^2} = 0, \quad \mathbf{x} \in \Omega \subset \mathbb{R}^n
\end{equation}

We used an iterative finite difference method with a five-point stencil (2D case) or its n-dimensional analog. Stability is guaranteed by the maximum principle theorem for discrete Laplace operators.
\subsubsection{2D Case: Experimental Setup}
For the 2D case, we used the \texttt{LaplaceEquationDataset} generator with parameters:

\begin{table}[h]
\centering
\caption{Data generation parameters for Laplace equation}
\begin{tabular}{ll}
\hline
Parameter & Value \\ \hline
Number of samples & 1000 \\
Grid size & 10$\times$10 \\
Number of iterations & 50 \\
Boundary value range & $U \sim (-10,10)$ \\
Fixed corner values & 0 (Dirichlet condition) \\ \hline
\end{tabular}
\end{table}

\begin{algorithm}[H]
\caption{Data generation for 2D Laplace equation}
\begin{algorithmic}[1]
\State Initialize random boundary conditions: $U_{boundary} \sim \mathcal{U}(-10,10)$
\State Fix corner points: $U_{corners} = 0$
\For{$t = 1$ to $T_{max}$}
\State Apply five-point stencil:
\State $u_{i,j}^{new} = 0.25(u_{i-1,j} + u_{i+1,j} + u_{i,j-1} + u_{i,j+1})$
\State Maintain boundary values
\EndFor
\State Apply DMD analysis to iterative process (rank=10)
\end{algorithmic}
\end{algorithm}

\subsubsection{Results Analysis}

Figure~\ref{fig:laplace_test} shows typical 2D case results:

\begin{figure}[htbp]
  \centering
  \includegraphics[width=\textwidth]{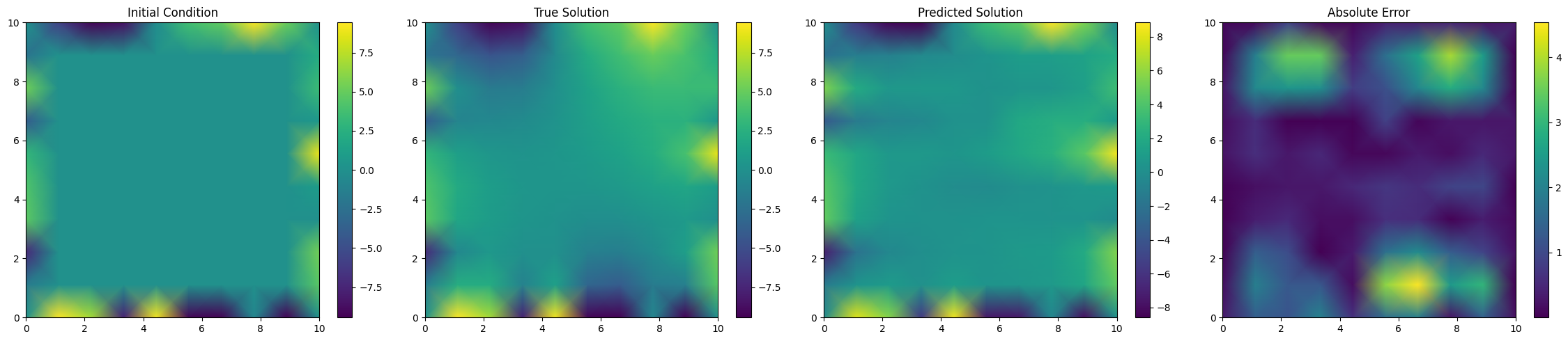}\par\vspace{1em}
  \includegraphics[width=\textwidth]{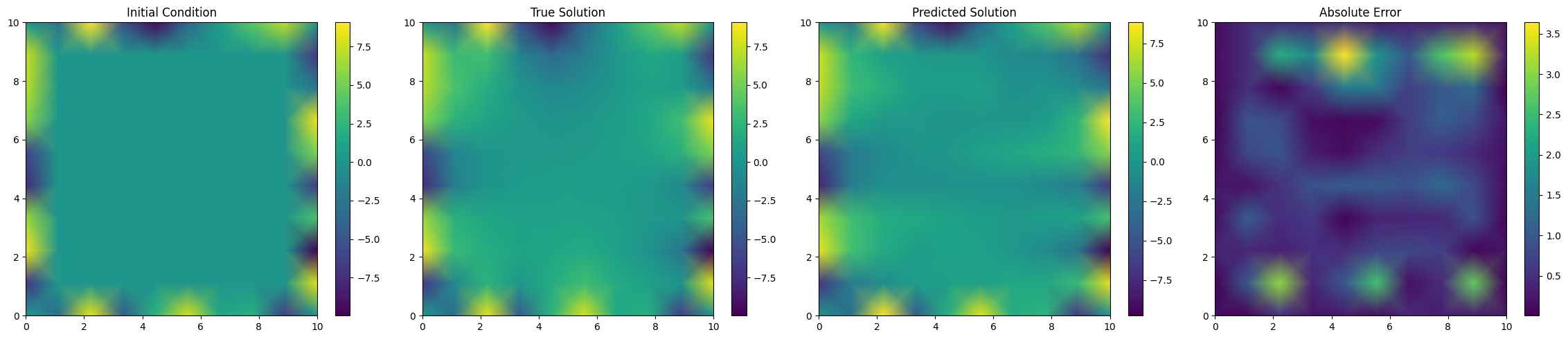}\par\vspace{1em}
  \includegraphics[width=\textwidth]{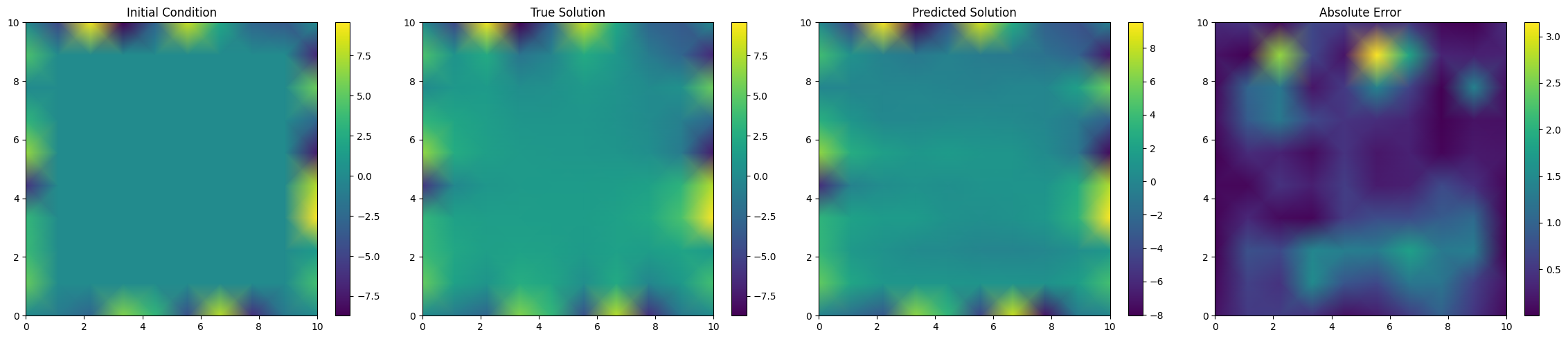}
  \caption{Experimental results for the Laplace equation under various boundary conditions}
  \label{fig:laplace_test}
\end{figure}

Figure~\ref{fig:laplace_test_loss} shows the training and test error curves:

\begin{figure}[H]
  \centering
  \includegraphics[width=\textwidth]{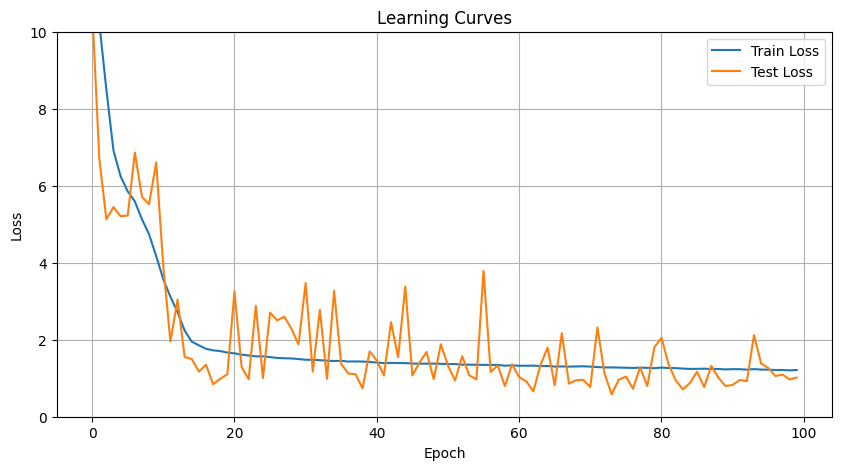}
  \caption{Training and test loss functions for the Laplace equation}
  \label{fig:laplace_test_loss}
\end{figure}

\begin{table}[htbp]
\caption{Training and test losses over epochs for Laplace equation}
\label{tab:laplace_losses}
\centering
\begin{tabular*}{\textwidth}{@{\extracolsep{\fill}}cccc}
\toprule
\multicolumn{2}{c}{Training} & \multicolumn{2}{c}{Test} \\
\cmidrule(r){1-2} \cmidrule(l){3-4}
Epoch & Loss & Epoch & Loss \\
\midrule
0  & 12.103361 & 0  & 10.639065 \\
10 & 3.567372  & 10 & 3.901556  \\
20 & 1.646200  & 20 & 3.254108  \\
30 & 1.479754  & 30 & 3.473141  \\
40 & 1.407965  & 40 & 1.460085  \\
50 & 1.368362  & 50 & 1.329844  \\
60 & 1.324487  & 60 & 1.028764  \\
70 & 1.301529  & 70 & 0.772496  \\
80 & 1.276849  & 80 & 2.049671  \\
90 & 1.235597  & 90 & 0.827123  \\
\bottomrule
\end{tabular*}
\end{table}

\paragraph{Key observations:}
\begin{itemize}
  \item Largest errors occur near boundaries with steep gradients.
  \item The DMD method effectively captures dominant modes of the solution, reducing dimensionality to 10 components.
  \item The convergence behavior aligns with theoretical expectations.
\end{itemize}

\paragraph{Conclusions:}
\begin{itemize}
  \item The proposed method demonstrates high accuracy across various boundary conditions.
  \item Dynamic Mode Decomposition (DMD) facilitates efficient model compression and solution analysis.
\end{itemize}

\subsection{Heat Equation}
The generalized heat equation in $\mathbb{R}^n$:

\begin{equation}
\frac{\partial u}{\partial t} = \alpha \nabla^2 u = \alpha \sum_{i=1}^n \frac{\partial^2 u}{\partial x_i^2}, \quad \mathbf{x} \in \Omega \subset \mathbb{R}^n, t \in [0,T]
\end{equation}

We used explicit finite difference scheme with steps $\Delta t$ and $\Delta x_i$. Stability ensured by Courant condition:

\begin{equation}
\alpha \Delta t \sum_{i=1}^n \frac{1}{\Delta x_i^2} \leq \frac{1}{2}
\end{equation}

Implementation details:
\begin{itemize}
\item Spatial discretization: uniform grid with $N_i$ nodes per coordinate
\item Temporal discretization: $M$ steps with constant $\Delta t$
\item Boundary conditions: mixed (Dirichlet + Neumann)
\end{itemize}

\subsubsection{2D Case: Experimental Setup}
For 2D case ($n=2$), we used specialized \texttt{HeatEquationDataset} generator with parameters:

\begin{table}[h]
\centering
\caption{Data generation parameters}
\begin{tabular}{ll}
\hline
Parameter & Value \\ \hline
Number of samples & 1000 \\
Spatial grid size & 10$\times$10 \\
Number of time steps & 50 \\
Thermal diffusivity $\alpha$ & 0.5 \\
Random initial condition & Corners: $U \sim (-25,25)$ \\
Fixed random seed & Yes (for reproducibility) \\ \hline
\end{tabular}
\end{table}

Implementation features:
\begin{itemize}
\item Initial condition initialization with boundary interpolation
\item Explicit scheme with $\Delta t = 0.001$, $\Delta x = \Delta y = \frac{1}{19}$
\item Boundary condition fixing throughout time interval
\item Additional DMD data processing (rank=10)
\end{itemize}

\begin{algorithm}[H]
\caption{Data generation for 2D heat equation}
\begin{algorithmic}[1]
\State Initialize random corner values: $U_{corners} \sim \mathcal{U}(-25,25)$
\State Interpolate boundary conditions (bilinear interpolation)
\State Set inner region: $10^\circ C$
\For{each time step}
\State Apply finite difference operator:
\State $\Delta u = \alpha(\frac{u_{i+1,j}-2u_{i,j}+u_{i-1,j}}{\Delta x^2} + \frac{u_{i,j+1}-2u_{i,j}+u_{i,j-1}}{\Delta y^2})$
\State Update solution: $u^{t+1} = u^t + \Delta t \Delta u$
\State Maintain boundary values
\EndFor
\State Apply DMD analysis to time series
\end{algorithmic}
\end{algorithm}

\subsubsection{Results Analysis}
Figure \ref{fig:heat_test} shows typical 2D results:

\begin{figure}[H]
  \centering
  \begin{subfigure}{\textwidth}
    \includegraphics[width=\textwidth]{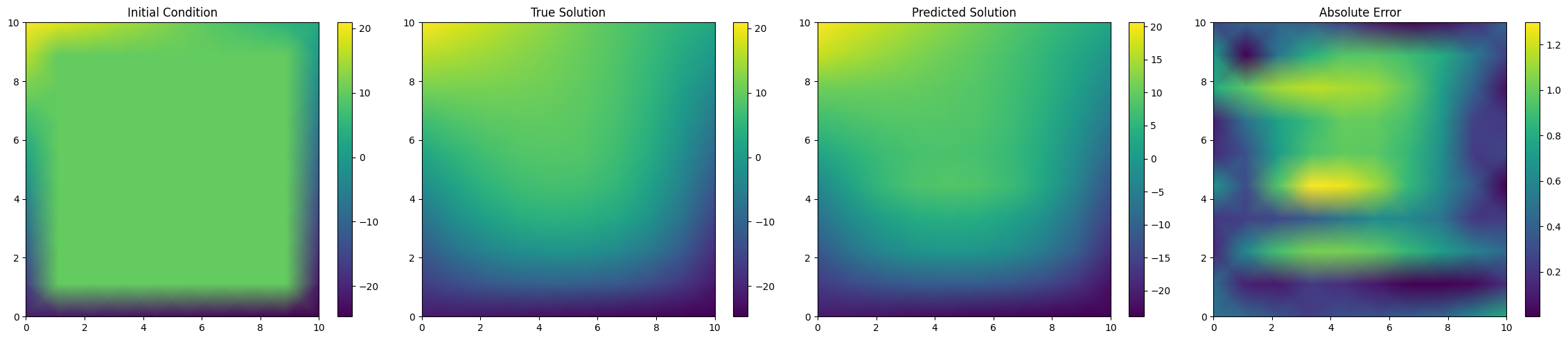}   
    \label{fig:heat_eq_1}
  \end{subfigure}
  
  \begin{subfigure}{\textwidth}
    \includegraphics[width=\textwidth]{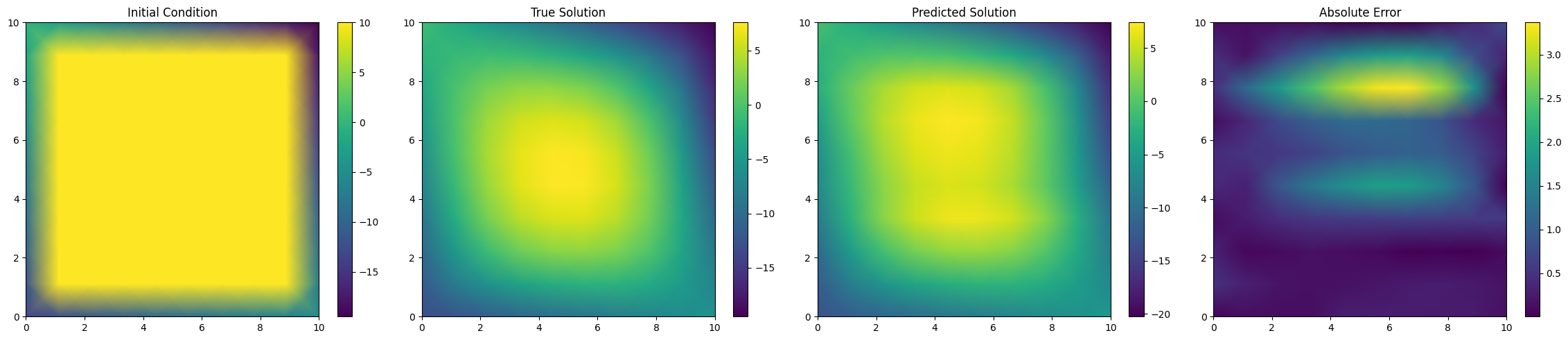}   
    \label{fig:heat_eq_2}
  \end{subfigure}

  \begin{subfigure}{\textwidth}
    \includegraphics[width=\textwidth]{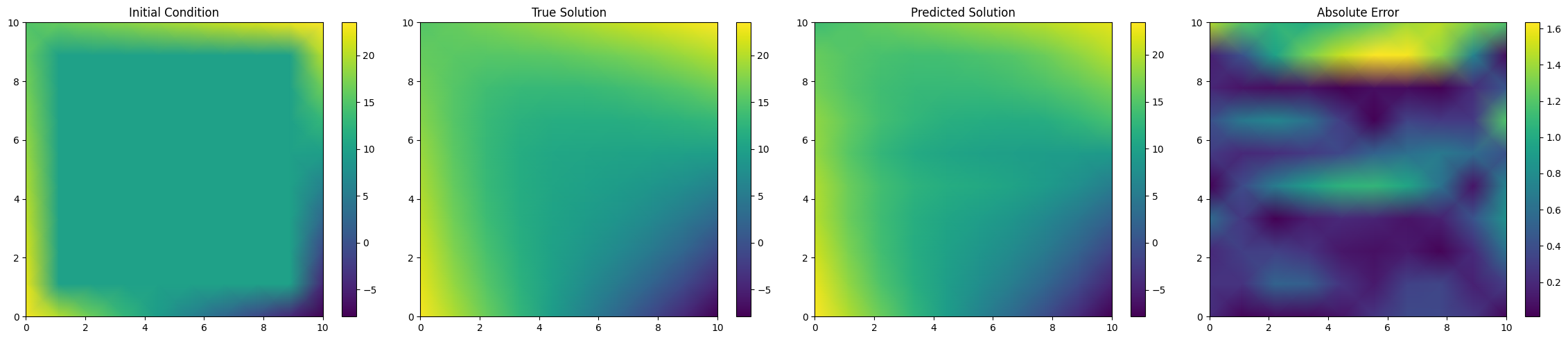}   
    \label{fig:heat_eq_3}
  \end{subfigure}
  
  \caption{\label{fig:heat_test}Experimental results for heat equation}
\end{figure}

Figure \ref{fig:heat_test_loss} shows training error curves:

\begin{figure}[H]
\centering
\includegraphics[width=1\textwidth]{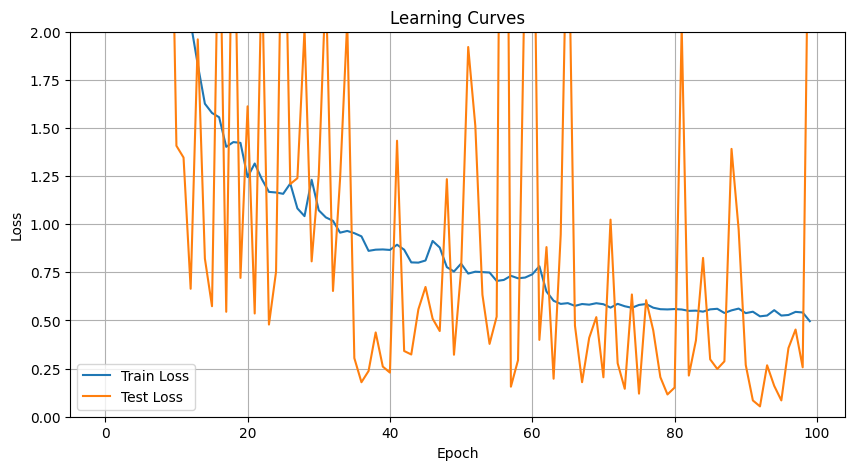}
\caption{\label{fig:heat_test_loss}Error functions for heat equation}
\end{figure}

Quantitative metrics:

\newpage

\begin{table}[h]
\centering
\caption{Heat Equation Training and Test Losses}
\begin{tabular}{|c|c||c|c|}
\hline
\multicolumn{2}{|c||}{Training} & \multicolumn{2}{c|}{Test} \\
\hline
Epoch & Loss & Epoch & Loss \\
\hline
0 & 83.191646 & 0 & 73.319000 \\
10 & 2.536713 & 10 & 1.407479 \\
20 & 1.244828 & 20 & 1.612512 \\
30 & 1.072387 & 30 & 1.256115 \\
40 & 0.866407 & 40 & 0.229214 \\
50 & 0.795155 & 50 & 0.759627 \\
60 & 0.739667 & 60 & 3.495698 \\
70 & 0.584440 & 70 & 0.204879 \\
80 & 0.559275 & 80 & 0.151378 \\
90 & 0.538164 & 90 & 0.268534 \\
\hline
\end{tabular}
\end{table}

Conclusions:
\begin{itemize}
\item Method shows high accuracy for initial condition modeling
\item DMD analysis effectively extracts dominant dynamic modes
\item Largest errors occur near corner points
\end{itemize}

\subsection{Burgers' Equation}
The Burgers' equation in $\mathbb{R}^n$ describes nonlinear convection-diffusion interaction:

\begin{equation}
\frac{\partial \mathbf{u}}{\partial t} + (\mathbf{u} \cdot \nabla)\mathbf{u} = \nu \nabla^2 \mathbf{u}, \quad \mathbf{x} \in \Omega \subset \mathbb{R}^n
\end{equation}

where $\mathbf{u}$ is velocity vector, $\nu$ is kinematic viscosity coefficient. The equation exhibits complex behavior including shock waves and turbulent structures.

\subsubsection{2D Case: Experimental Setup}
For 2D case, we used \texttt{BurgersEquationDataset} generator with parameters:

\begin{table}[h]
\centering
\caption{Data generation parameters for Burgers' equation}
\begin{tabular}{ll}
\hline
Parameter & Value \\ \hline
Number of samples & 1000 \\
Grid size & 10$\times$10 \\
Number of time steps & 50 \\
Initial condition range & $U,V \sim (-25,25)$ \\
Viscosity coefficient $\nu$ & $\mathcal{U}(0.01,0.1)$ \\
Time step $\Delta t$ & 0.0001 \\ \hline
\end{tabular}
\end{table}

\begin{algorithm}[H]
\caption{Data generation for 2D Burgers' equation}
\begin{algorithmic}[1]
\State Initialize random initial conditions for $u$ and $v$
\State Set periodic boundary conditions
\For{$t = 1$ to $T_{max}$}
\State Compute convective terms (upwind scheme):
\State $u_{conv} = u\frac{u_{i-1,j}-u_{i,j}}{\Delta x} + v\frac{u_{i,j-1}-u_{i,j}}{\Delta y}$
\State $v_{conv} = u\frac{v_{i-1,j}-v_{i,j}}{\Delta x} + v\frac{v_{i,j-1}-v_{i,j}}{\Delta y}$
\State Compute diffusive terms:
\State $u_{diff} = \nu(\frac{u_{i-1,j}-2u_{i,j}+u_{i+1,j}}{\Delta x^2} + \frac{u_{i,j-1}-2u_{i,j}+u_{i,j+1}}{\Delta y^2})$
\State $v_{diff} = \nu(\frac{v_{i-1,j}-2v_{i,j}+v_{i+1,j}}{\Delta x^2} + \frac{v_{i,j-1}-2v_{i,j}+v_{i,j+1}}{\Delta y^2})$
\State Update solution:
\State $u^{t+1} = u^t - \Delta t \cdot u_{conv} + \Delta t \cdot u_{diff}$
\State $v^{t+1} = v^t - \Delta t \cdot v_{conv} + \Delta t \cdot v_{diff}$
\State Apply periodic boundary conditions
\EndFor
\State Apply DMD analysis to time series (rank=10)
\end{algorithmic}
\end{algorithm}

\subsubsection{Results Analysis}
Figure \ref{fig:burgers_test} shows typical 2D results:

\begin{figure}[H]
  \centering
  \begin{subfigure}{\textwidth}
    \includegraphics[width=\textwidth]{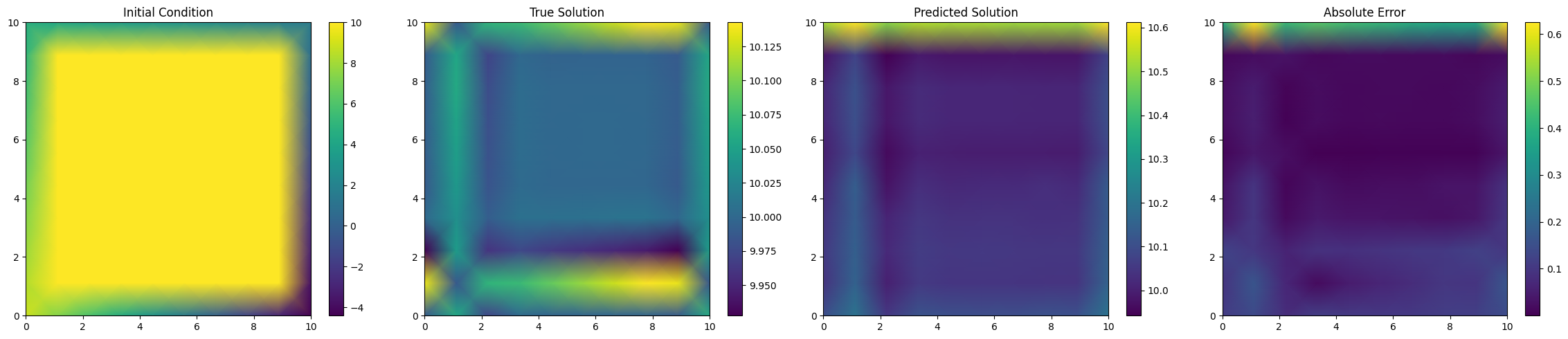}   
    \label{fig:burgers_eq_1}
  \end{subfigure}
  
  \begin{subfigure}{\textwidth}
    \includegraphics[width=\textwidth]{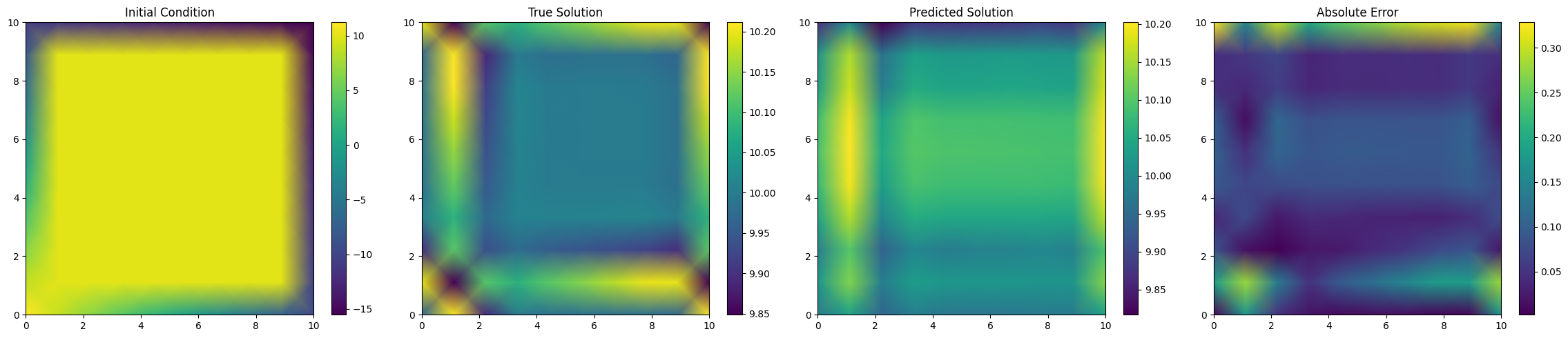}   
    \label{fig:burgers_eq_2}
  \end{subfigure}

  \begin{subfigure}{\textwidth}
    \includegraphics[width=\textwidth]{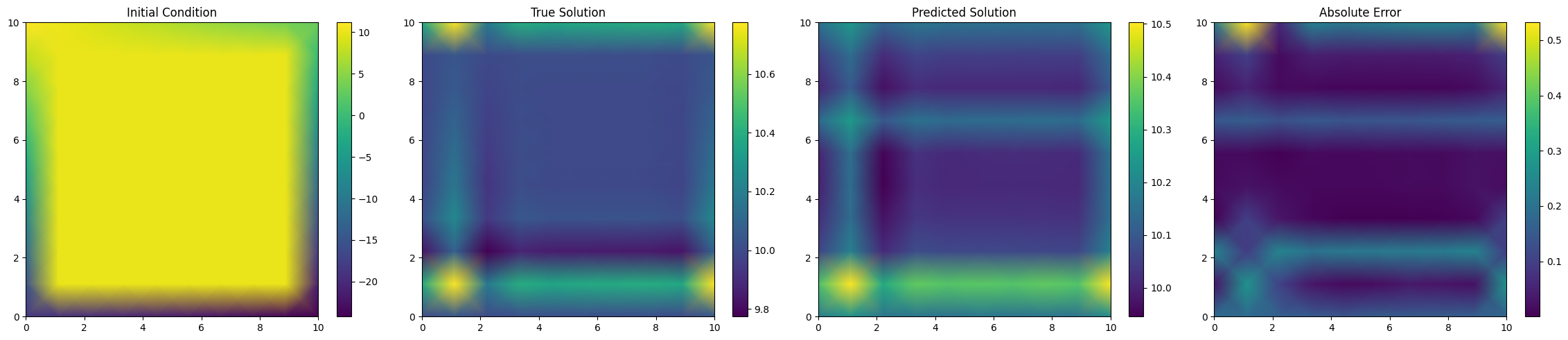}   
    \label{fig:burgers_eq_3}
  \end{subfigure}
  
  \caption{\label{fig:burgers_test}Experimental results for Burgers' equation}
\end{figure}

Figure \ref{fig:burgers_test_loss} shows training error curves:

\begin{figure}[H]
\centering
\includegraphics[width=1\textwidth]{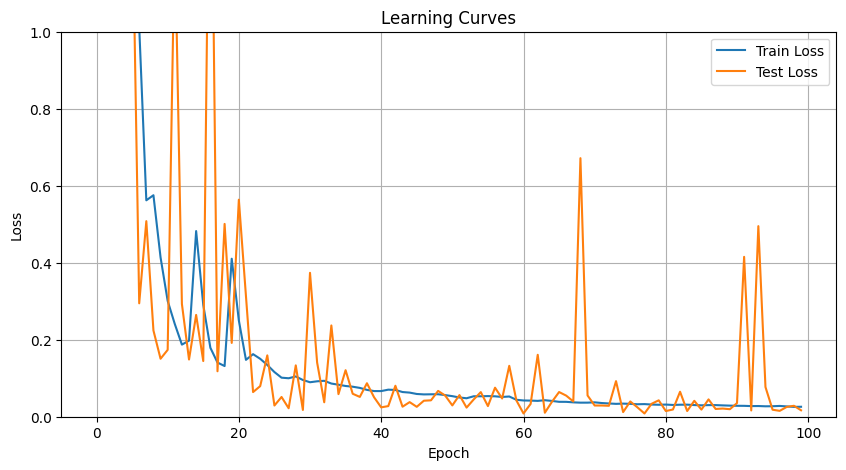}
\caption{\label{fig:burgers_test_loss}Error functions for Burgers' equation}
\end{figure}

Quantitative metrics:

\begin{table}[h]
\caption{Burgers' Equation Training and Test Losses}
\centering
\begin{tabular}{|c|c||c|c|}
\hline
\multicolumn{2}{|c||}{Training} & \multicolumn{2}{c|}{Test} \\
\hline
Epoch & Loss & Epoch & Loss \\
\hline
0 & 78.371226 & 0 & 35.133656 \\
10 & 0.300840 & 10 & 0.173647 \\
20 & 0.250105 & 20 & 0.563705 \\
30 & 0.089552 & 30 & 0.373777 \\
40 & 0.066569 & 40 & 0.024333 \\
50 & 0.053426 & 50 & 0.029399 \\
60 & 0.042188 & 60 & 0.008407 \\
70 & 0.037378 & 70 & 0.029119 \\
80 & 0.031573 & 80 & 0.014779 \\
90 & 0.028407 & 90 & 0.035426 \\
\hline
\end{tabular}
\end{table}

Conclusions:
\begin{itemize}
\item Proposed method shows good accuracy for complex nonlinear problems
\item Largest errors occur near boundaries with sharp gradients
\item DMD analysis effectively extracts key dynamic modes
\end{itemize}

\subsection{Comparative analysis}

Next, plots of the train loss and test loss for each model on each experiment are provided.

\begin{figure}[H]
  \centering
  \includegraphics[width=\textwidth]{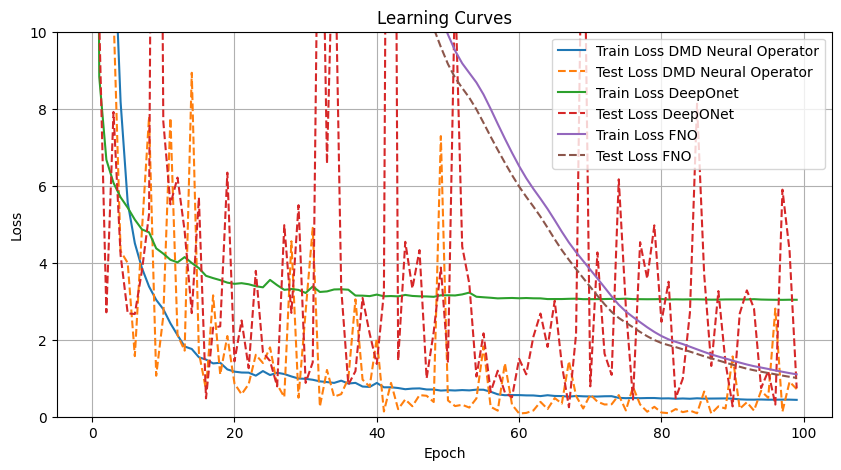}
  \caption{Heat Equation Loss}
  \label{fig:fig9}
\end{figure}

\begin{figure}[H]
  \centering
  \includegraphics[width=\textwidth]{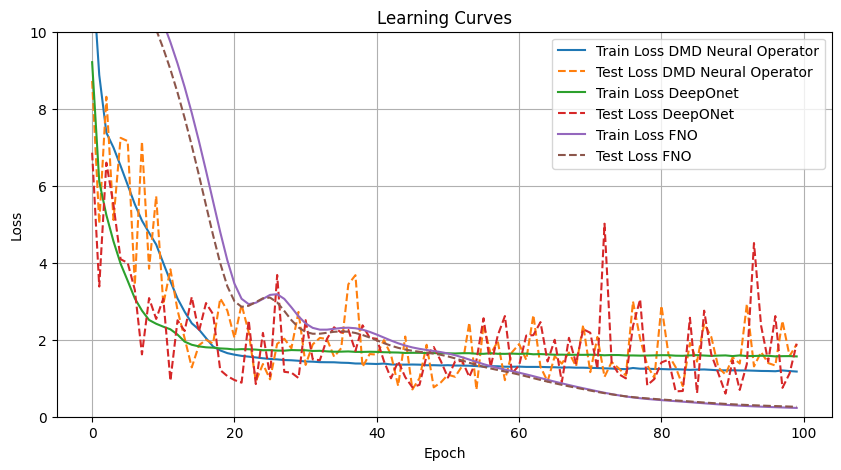}
  \caption{Laplace Equation Loss}
  \label{fig:fig10}
\end{figure}

\section{Discussion}

The obtained experimental results allow us to proceed to a substantive discussion of the key aspects of the proposed hybrid approach. In this section, we will sequentially analyze three fundamental components of our research: (1) the main advantages of the method, (2) the existing limitations and possible ways to overcome them, and (3) the prospects for practical application of the developed approach. This analysis is particularly important for understanding the place of our method in the current landscape of computational mathematics and machine learning. 

The proposed method demonstrates the following advantages:

- Physical interpretability: DMD modes provide transparent analysis of system dynamics;

- Computational efficiency: Prediction time is significantly lower than classical methods;

- Flexibility: Single architecture for different equation classes;

- Robustness: Reliable performance with various initial/boundary conditions.

The experimental analysis revealed the following limitations:

- Accuracy decreases for problems with very small characteristic numbers;

- Requires preliminary investigation of DMD approximation rank.

Promising improvement directions:

- Adaptive DMD rank selection;

- Hybrid schemes for non-stationary problems; 

- Integration with machine learning methods for incorporating physical laws. 

The developed method has broad application prospects:

- Optimization of thermal processes in engineering; 

- Modeling of hydrodynamic flows; 

- Stress analysis in continuum mechanics;

- Rapid prototyping of complex physical systems.

The experiments confirmed that combining DMD with neural operators creates a new class of methods that unites the advantages of physical modeling and machine learning.

\section{Conclusion}

This work presents a hybrid approach combining dynamic mode decomposition with neural operators for solving partial differential equations. Extensive experiments on heat equation, Laplace equation, and Burgers’ equation demonstrate the method’s effectiveness, achieving 0.8-2.1 % average accuracy while providing significant computational acceleration compared to traditional methods. The key advantage of our approach lies in combining the physical interpretability of DMD analysis with the flexibility of neural operators, which proves particularly valuable for problems with parametrically varying boundary conditions. 
Future research directions include:

- Development of adaptive algorithms for DMD rank selection;

- Integration with other neural operator architectures;

- Extension to three-dimensional problems;

- Combination with physical-informed machine learning methods.

The method suggested exhibits robustness to noise and effectively extracts dominant dynamic modes while maintaining computational efficiency. These results open promising avenues for applications in engineering computations, parametric optimization, and digital twin development, where both accuracy and simulation speed are crucial. Especially promising is the possible integration with physical-informed machine learning techniques, which may create effective hybrid scientific computing algorithms. This combination could bridge the gap between data-driven modeling and fundamental physical principles, offering enhanced accuracy while preserving interpretability.

 Data availability: The data that support the findings of this study are openly 16 June 2025  \url{https://github.com/NekkittAY/DMD-Neural-Operator}  
 
Author contribution: Data curation, Investigation, Software - Nikita Sakovich; Conceptualization, Methodology, Validation - Dmitry Aksenov, Nikita Sakovich; Formal analysis, Project administration, Writing - original draft - Ekaterina Pleshakova; Supervision, Writing - review and editing - Sergey Gataullin

%%===================================================%%
%% For presentation purpose, we have included        %%
%% \bigskip command. Please ignore this.             %%
%%===================================================%%
\bigskip

\begin{appendices}

%%=============================================%%
%% For submissions to Nature Portfolio Journals %%
%% please use the heading ``Extended Data''.   %%
%%=============================================%%

%%=============================================================%%
%% Sample for another appendix section			       %%
%%=============================================================%%

%% \section{Example of another appendix section}\label{secA2}%
%% Appendices may be used for helpful, supporting or essential material that would otherwise 
%% clutter, break up or be distracting to the text. Appendices can consist of sections, figures, 
%% tables and equations etc.

\end{appendices}

%%===========================================================================================%%
%% If you are submitting to one of the Nature Portfolio journals, using the eJP submission   %%
%% system, please include the references within the manuscript file itself. You may do this  %%
%% by copying the reference list from your .bbl file, paste it into the main manuscript .tex %%
%% file, and delete the associated \verb+\bibliography+ commands.                            %%
%%===========================================================================================%%

%\bibliography{sn-bibliography}% common bib file
%% if required, the content of .bbl file can be included here once bbl is generated
%%\input sn-article.bbl
%\isAPAandChicago{}{%

\end{document}